\pgfplotsset{compat=1.11}
\pgfplotsset{
	colormap={parula}{
	rgb255=(53,42,135)
	rgb255=(15,92,221)
	rgb255=(18,125,216)
	rgb255=(7,156,207)
	rgb255=(21,177,180)
	rgb255=(89,189,140)
	rgb255=(165,190,107)
	rgb255=(225,185,82)
	rgb255=(252,206,46)
	rgb255=(249,251,14)}
}
\DeclareMathOperator{\R}{{\mathbb{R}}}
\DeclareMathOperator*{\argmax}{arg\,max} 
\DeclareMathOperator{\diag}{diag}
\newcommand{\TV}[1]{\mathrm{TV}^{(#1)}}
\newcommand{\bigO}{\mathcal{O}}
\newcommand{\N}{\mathbb{N}}
\DeclareMathOperator{\Lip}{Lip}
\newcommand{\BV}[1]{\mathrm{BV}^{(#1)}}
\newcommand{\linlay}{A}
\newtheorem{theorem}{Theorem}[section]
\newtheorem{lemma}[theorem]{Lemma}
\newtheorem{remark}[theorem]{Remark}
\newtheorem{definition}[theorem]{Definition}
\newtheorem{corollary}[theorem]{Corollary}
\newtheorem{proposition}[theorem]{Proposition}
\title{Approximation of Lipschitz Functions \\using Deep Spline Neural Networks}
\date{\today}
\author{Sebastian Neumayer\footnotemark[1]\,\,\,\footnotemark[2] \and Alexis Goujon\footnotemark[1]\,\,\,\footnotemark[2] \and Pakshal Bohra\footnotemark[1] \and Michael Unser\footnotemark[1]}
\begin{document}

\maketitle
\renewcommand*{\thefootnote}{\fnsymbol{footnote}}
\footnotetext[1]{Biomedical Imaging Group, \'Ecole Polytechnique F\'ed\'erale de Lausanne (EPFL),
	Station 17, CH-1015 Lausanne, {\text \{forename.name\}@epfl.ch}. }
\footnotetext[2]{The first two authors contributed equally to this work.}
\renewcommand*{\thefootnote}{\arabic{footnote}}

\begin{abstract}
    Lipschitz-constrained neural networks have many applications in machine learning.
    Since designing and training expressive Lipschitz-constrained networks is very challenging, there is a need for improved methods and a better theoretical understanding.
    Unfortunately, it turns out that ReLU networks have provable disadvantages in this setting.
    Hence, we propose to use learnable spline activation functions with at least 3 linear regions instead.
    We prove that this choice is optimal among all component-wise $1$-Lipschitz activation functions in the sense that no other weight constrained architecture can approximate a larger class of functions.
    Additionally, this choice is at least as expressive as the recently introduced non component-wise Groupsort activation function for spectral-norm-constrained weights.
    Previously published numerical results support our theoretical findings.
\end{abstract}


\section{Introduction}
Throughout the past years, Lipschitz-constrained neural networks (NNs) have proven to be useful in several areas of machine learning, e.g., for provably convergent Plug-and-Play algorithms \cite{HNS2021, MMHC2017,ryu2019plug,SVW2016,TRPW20,VBW13}, to obtain robustness guarantees \cite{HN20,Pauli2022,TSS2018} or in Wasserstein GANs \cite{pmlr-v70-arjovsky17a,GAAD2017}.
Unfortunately, designing and training Lipschitz-constrained NNs is difficult, as naive upper bounds on the Lipschitz constant of multi-layer models are often too pessimistic.
More advanced estimators for this NP-hard problem are based on semi-definite programming \cite{FRH2019,LRC2020}, adversarial training \cite{BRRS2021,RKH2020} or deriving sharper estimates for the composition of layers \cite{VS2018}.
Unfortunately, these methods are either computationally expensive, or do not provide a proper upper bound.

Another possible approach for tackling the problem is to improve the model architecture so that the naive bounds become sharper.
A general overview of NN architectures and in particular Lipschitz-constrained ones can be found in \cite{Calin2020}.
The most common approach towards Lipschitz-constrained architectures is to bound the norm of each linear layer by some constant, e.g., in form of the spectral or other $p$-norms \cite{GFPC18,MKKY2018,SGL2019}.
Other approaches go even further and enforce orthogonality of the weight matrices \cite{HHNPSS2019,HNS2021,huang2018orthogonal}.
In combination with $1$-Lipschitz activations, this results in architectures with a Lipschitz constant bounded by the product of the norms of the weights.
However, this estimate is in general quite pessimistic, especially for deep models. Consequently, this additional structural constraint often leads to vanishing gradients \cite{LHAL2019} and seriously reduced expressivity of the model.
Remarkably, the commonly used ReLU aggravates the situation even more.
For instance, it is shown in \cite{HCC2018} that ReLU NNs with $\infty$-norm weight constraints have second order total variation bounded independently of the depth.
Further, it is proven in \cite{ALG19} that under spectral norm constraints, any scalar-valued ReLU NN $\Phi$ with $\Vert \nabla \Phi \Vert_2 = 1$ a.e.\ is necessarily linear.
To circumvent the described issues, several new activation functions have been proposed recently, e.g., Groupsort \cite{ALG19}, the related Householder activation functions \cite{SSF2021} or the Weibull activation function \cite{pmlr-v139-zhao21e}.
Note that, contrary to the ReLU, all these activation functions are multivariate. 
Analyzing the expressivity of the resulting NNs and determining their applicability in practice is an active area of research.

Given a NN with $1$-Lipschitz layers, it is by no means clear which class of functions it can approximate.
Ideally, given a compact set $D \subset \R^d$ equipped with the $p$-norm, it should be possible to approximate all scalar-valued $1$-Lipschitz functions, which are denoted by $\Lip_{1,p}(D)$.
The first result in this direction is provided in \cite{ALG19}, where the authors show that using the Groupsort activation function and $\infty$-norm-constrained weights indeed allows universal approximation of $\Lip_{1,p}(D)$.
The behavior of such NNs was then further investigated in \cite{CHC2019,TSB2021}.
Unfortunately, the employed proof strategies cannot be generalized to other norms.
So far, not even partial results are known for this very challenging problem, and comparing the approximation power of different architectures is an important first step.
From a practical perspective, Groupsort NNs have yielded promising results, which compare favorably against ReLU NNs with similar architectures \cite{ALG19}.
A classic benchmark example is approximating the absolute value function, for which exact representation with ReLU is impossible.

Most substantial results in this area so far rely on multivariate activation functions. 
Although the ReLU activation function is indeed too limiting, the class of component-wise activation functions should not be written off too early.
Following this idea, we analyze deep spline NNs, whose activation functions are learnable linear splines \cite{AU2019,BCGA2020, unserRepresenterTheoremDeep2019}.
Since bounds on the Lipschitz constant of compositions are usually too pessimistic, the rationale is to increase the expressivity of the activation function while still being able to efficiently control its Lipschitz constant.
As reported in \cite{bohra2021learning}, Lipschitz-constrained deep spline NNs perform well in practice.
In this work, we shed light on the theoretical benefits of these NNs over ReLU-like NNs.
In particular, we prove that among weight-constrained NNs with component-wise $1$-Lipschitz activation functions, splines with 3 linear regions suffice to approximate the largest possible set of functions. Moreover, for the spectral norm constraint, which is commonly used in practice, we show that deep spline NNs are at least as expressive as Groupsort NNs. Due to these theoretical findings, we expect them to be beneficial in applications that involve Lipschitz constraints.


\paragraph{Outline and contributions}
In Section~\ref{sec:PreLiminaries}, we revisit $1$-Lipschitz continuous piecewise linear (CPWL) functions and $1$-Lipschitz  NNs.
Here, we show that they can approximate any function in $\Lip_{1,p}(D)$, where $D \subset \R^d$ is compact.
Since constructing $1$-Lipschitz NNs is non-trivial, we briefly discuss two architectures for this task, namely deep spline and Groupsort NNs.
Then, in Section~\ref{sec:LimitReLU}, we formulate extensions of results on the limitations of weight-constrained NNs with ReLU activation functions.
More precisely, we show that ReLU-like NNs cannot represent certain simple functions for any $p$-norm weight constraint.
Based on a second-order total variation argument, we further show that they cannot be universal approximators for $\infty$-norm weight constraints.
Next, in Section~\ref{sec:ApproxLipNet}, we study the approximation properties of deep spline NNs.
Here, we prove our main result, i.e., that deep spline NNs with 3 linear regions achieve the maximum expressivity among NNs with component-wise activation functions.
Further, we discuss the relation between deep spline and Groupsort NNs.
Finally, we draw conclusions in Section~\ref{sec:Conclusions}.

\section{Lipschitz-Constrained Neural Networks}\label{sec:PreLiminaries}
In this paper, we investigate general NN architectures consisting of $K \in {\mathbb N}$ layers with widths $n_{1}, \ldots , n_{K}$ that are given by mappings 
$\Phi = \Phi(\cdot\,; u)\colon {\mathbb R}^{d} \to {\mathbb R}^{n_{K}}$ of the form 
\begin{equation}\label{eq:NN_archi}
\Phi\left(x;u\right) \coloneqq A_K \circ\sigma_{K-1,\alpha_{K-1}} \circ A_{K-1}\circ\sigma_{K-2,\alpha_{K-2}} \circ \cdots \circ \sigma_{1,\alpha_1} \circ A_1(x).
\end{equation}
Here, the affine functions $A_{k}\colon {\mathbb R}^{n_{k-1}} \to {\mathbb R}^{n_{k}}$ are given by
\begin{equation} \label{act1} 
A_{k}(x) \coloneqq W_{k} x + b_{k}, \qquad  k =1,\ldots, K, 
\end{equation}
with weight matrices $W_{k} \in {\mathbb R}^{n_{k}, n_{k-1}}$, $n_{0}=d$ and bias vectors $b_{k} \in {\mathbb R}^{n_{k}}$. Further, the model includes parametrized nonlinear activation functions $\sigma_{k,\alpha}\colon\mathbb{R}^{n_k} \rightarrow \mathbb{R}^{n_k}$ with corresponding parameters $\alpha_k$, $k=1,\ldots, K-1$.
For the case of component-wise activation functions, we have $\sigma_{k,\alpha} (x) = (\sigma_{k,\alpha,j}(x_{j}))_{j=1}^{n_k}$.
The complete parameter set of the NN is denoted by  \smash{$u\coloneqq\left(W_k,b_k,\alpha_k\right)_{k=1}^{K}$}.
For an illustration see Figure~\ref{fig:neuralnet}.
We sometimes drop the index $k$ in the activation function $\sigma_{k,\alpha}$ and the dependence on the parameter $u$ in $\Phi$ to simplify the notation.
Recall that the architecture ~\eqref{eq:NN_archi} results in a CPWL function whenever the activation functions themselves are CPWL functions such as the ReLU.
Next, we investigate the approximation properties of this architecture under Lipschitz constraints on $\Phi(\cdot,u)$.

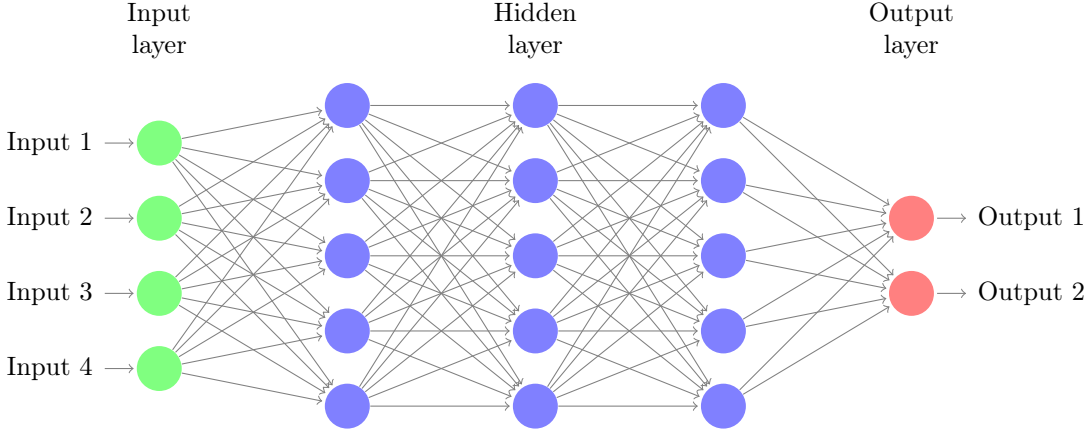
\begin{figure}[t]
\def\layersep{2.5cm}
\centering
\begin{tikzpicture}[shorten >=1pt,->,draw=black!50, node distance=\layersep]
    \tikzstyle{every pin edge}=[<-,shorten <=1pt]
    \tikzstyle{neuron}=[circle,fill=black!25,minimum size=17pt,inner sep=0pt]
    \tikzstyle{input neuron}=[neuron, fill=green!50];
    \tikzstyle{output neuron}=[neuron, fill=red!50];
    \tikzstyle{hidden neuron}=[neuron, fill=blue!50];
    \tikzstyle{annot} = [text width=4em, text centered]

    \foreach \name / \y in {1,...,4}
        \node[input neuron, pin=left:Input \y] (I-\name) at (0,-\y) {};

    \foreach \name / \y in {1,...,5}
        \path[yshift=0.5cm]
            node[hidden neuron] (Ha-\name) at (\layersep,-\y cm) {};

    \foreach \name / \y in {1,...,5}
        \path[yshift=0.5cm]
            node[hidden neuron] (Hb-\name) at (2*\layersep,-\y cm) {};
            
    \foreach \name / \y in {1,...,5}
        \path[yshift=0.5cm]
            node[hidden neuron] (Hc-\name) at (3*\layersep,-\y cm) {};        

    \node[output neuron,pin={[pin edge={->}]right:Output 1}, right of=Hc-3] (O1) at (3*\layersep,-2 cm) {};
    \node[output neuron,pin={[pin edge={->}]right:Output 2}, right of=Hc-3]  (O2) at (3*\layersep,-3 cm) {};

    \foreach \source in {1,...,4}
        \foreach \dest in {1,...,5}
            \path (I-\source) edge (Ha-\dest);
            
    \foreach \source in {1,...,5}
        \foreach \dest in {1,...,5}
            \path (Ha-\source) edge (Hb-\dest);
            
    \foreach \source in {1,...,5}
        \foreach \dest in {1,...,5}
            \path (Hb-\source) edge (Hc-\dest);

    \foreach \source in {1,...,5}
        \path (Hc-\source) edge (O1);
        
    \foreach \source in {1,...,5}
        \path (Hc-\source) edge (O2);

    \node[annot,above of=Hb-1, node distance=1cm] (hm) {Hidden layer};
    \node[annot,above of=Ha-1, node distance=1cm] (hl) {};
    \node[annot,above of=Hc-1, node distance=1cm] (hr) {};
    \node[annot,left of=hl] {Input layer};
    \node[annot,right of=hr] {Output layer};
\end{tikzpicture}
\caption{Model of a feed forward NN with three hidden layers, i.e., $d=4$, $K=4$, $n_1=n_2=n_3=5, n_4=2$.}
\label{fig:neuralnet}
\end{figure}

\subsection{Universality of 1-Lipschitz ReLU Networks}\label{sec:Universality}
First, we briefly revisit the approximation of Lipschitz function by CPWL functions, for which we give a precise definition with related notations below.
\begin{definition}
  \label{df:cpwl}
  A continuous function $f \colon \R^d \rightarrow \R^n$ is called continuous and piecewise linear (CPWL) if there exist a set $\{f^m\colon m=1,\ldots,M\}$ of affine functions, also called affine pieces, and closed subsets $(\Omega_m)_{m=1}^M$ of $\R^d$ with nonempty and pairwise disjoint interiors, also called projection regions~\cite{Tarela1990}, such that $\cup_{m=1}^M\Omega_m = \R^d$ and $f_{|\Omega_m}=f^m_{|\Omega_m}$.
\end{definition}
Assume that we are given a collection of tuples $(x_i,y_i) \in \R^d \times \R$, $i=1,\ldots,N$, which can be interpreted as samples from a function $f \colon \R^d \to \R$.
Let
\[L_{x,y}^p \coloneqq \max_{i,j} \frac{\vert y_i - y_j \vert}{\Vert x_i -x_j \Vert_p}\]
denote the Lipschitz constant associated with these points.
Then, a first natural question is whether it is always possible to find an interpolating CPWL function $g$ with $p$-norm Lipschitz constant $\Lip_p (g)= L_{x,y}^p$.
\begin{proposition}\label{prop:GeneralApprox}
    For the tuples $(x_i,y_i) \in \R^d \times \R$, $i=1,\ldots,N$, there exists a CPWL function $g$ with $\Lip_p (g) = L_{x,y}^p$, $p \in [1,\infty]$, such that $g(x_i) = y_i$ for all $i=1,\ldots,N$.
\end{proposition}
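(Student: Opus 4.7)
The plan is to construct $g$ explicitly as a max--min of finitely many affine functions, which is automatically CPWL. Since $L \coloneqq L_{x,y}^p$ is already a lower bound for the $p$-Lipschitz constant of any interpolant, it suffices to produce one achieving $\Lip_p \leq L$. The canonical McShane/Whitney extension $g_0(x) = \max_i [\,y_i - L \Vert x - x_i \Vert_p\,]$ does this analytically, but is not CPWL for general $p \in [1,\infty]$, so I replace each radial ``lower cone'' by a polyhedral cone built from finitely many supporting hyperplanes at the data points.

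For each ordered pair $(i,j)$ with $i \neq j$, I pick $v_{ij} \in \R^d$ with $\Vert v_{ij} \Vert_{p^*} \leq L$ and $\langle v_{ij}, x_j - x_i \rangle = -L \Vert x_j - x_i \Vert_p$, where $p^*$ denotes the Hölder conjugate exponent. Such a $v_{ij}$ exists because the dual unit ball is compact in $\R^d$ and $\Vert z \Vert_p = \max_{\Vert v \Vert_{p^*} \leq 1} \langle v, z \rangle$. With these vectors in hand I define
\[g(x) \coloneqq \max_{i} \min_{j \neq i} \bigl[\, y_i + \langle v_{ij}, x - x_i \rangle \,\bigr].\]
This is a finite max of finite mins of affine functions, hence CPWL in the sense of Definition~\ref{df:cpwl}.

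It remains to verify interpolation and the sharp Lipschitz bound. At a data point $x_k$, the $i = k$ branch contributes $\min_{j \neq k} y_k = y_k$; for $i \neq k$, choosing $j = k$ inside the inner min gives $y_i + \langle v_{ik}, x_k - x_i \rangle = y_i - L \Vert x_k - x_i \Vert_p \leq y_k$, where the inequality is exactly the definition of $L_{x,y}^p$. Hence $g(x_k) = y_k$. Every affine piece of $g$ has gradient $v_{ij}$ with $\Vert v_{ij} \Vert_{p^*} \leq L$ and is therefore $p$-Lipschitz with constant at most $L$; since pointwise max and min preserve the Lipschitz constant, $\Lip_p(g) \leq L$, and combined with the lower bound we obtain $\Lip_p(g) = L$. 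The only delicate point in the argument is the coordinated selection of the $v_{ij}$: they must simultaneously lie in the dual ball of radius $L$ \emph{and} certify the interpolation inequality $y_i + \langle v_{ij}, x_j - x_i \rangle \leq y_j$ at every data point, which is precisely what the dual-norm attainment characterization guarantees.
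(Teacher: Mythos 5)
Your proof is correct and follows essentially the same strategy as the paper: both construct $g$ as a lattice (max/min) combination of affine functions whose gradients are H\"older-saturating dual vectors, and both verify interpolation by testing the appropriate pairwise affine piece at each data point. The only difference is cosmetic --- you take a max of mins with every piece having dual norm exactly $L$ (a polyhedral McShane extension anchored at a single point), while the paper takes a min of maxes of pieces with slope $\vert y_j-y_i\vert/\Vert x_j-x_i\Vert_p$ that pass through both $(x_i,y_i)$ and $(x_j,y_j)$.
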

Since we are not aware of a proof for general $p$, we provide one below.
\begin{proof}
    Let $q$ be such that $1/p+1/q = 1$.
    For $p < \infty$, define $u_{ij}\in\mathbb{R}^d$ as the vector given by
    \[
    (u_{ij})_k = \mathrm{sign}((x_i - x_j)_k)|(x_i - x_j)_k|^{p/q}.
    \]
    If $p=+\infty$, we choose $k_0$ with $\Vert x_i - x_j\Vert_{\infty}= \vert(x_i - x_j)_{k_0}\vert$, and define $(u_{ij})_{k_0} = \mathrm{sign}(x_i - x_j)_{k_0}$ with all other components of $u_{ij}$ set to 0.
    This saturates the Hölder's inequality with
    \[
    \langle u_{ij},x_j-x_i \rangle = \sum_{k=1}^d \vert (u_{ij})_k (x_j-x_i)_k  \vert = \Vert u_{ij} \Vert_q \Vert x_j - x_i \Vert_p ,
    \]
    where we used that $u_{ij}$ and $(x_j-x_i)$ have components with the same sign. Define the linear function
    \[g_{i,j}(x) = y_i + \frac{y_j-y_i}{\Vert x_j - x_i \Vert_p \Vert u_{i,j}\Vert_q}\langle u_{ij},x-x_i \rangle,\]
    which is such that $g_{i,j}(x_i)=y_i$ and $\Lip_p(g_{i,j}) = \vert y_j-y_i\vert/\Vert x_j - x_i \Vert_p$, as $\sup_{\Vert x\Vert_p\leq 1}\langle u_{ij},x\rangle = \Vert u_{ij} \Vert_q$.
    Next, set $g_i(x) = \max_{j} g_{i,j}(x)$, for which it holds $g_{i}(x_i)=y_i$ and $\Lip_p(g_{i}) = \max_j \vert y_j-y_i \vert/\Vert x_j - x_i \Vert_p$.
    Then, we define $g(x)= \min_{i} g_i(x)$ and directly obtain $g(x_j)\leq y_j$ for any $j=1,\ldots,N$.
    However, we also have
    \[g_i(x_j) \geq g_{i,j}(x_j) = y_i + y_j - y_i = y_j,\]
    which then implies $g(x_j) = y_j$ for any $j=1,\ldots,N$.
    Further, we directly get that $\Lip_p(g) = L_{x,y}^p$. Finally, by recalling that the maximum and the minimum of any number of CPWL functions is CPWL as well, we conclude that $g$ is CPWL and the claim follows.
\end{proof}
\begin{remark}
Here, we already see that the construction is more involved than in the 1D case.
In general, an arbitrary triangulation of the data points leads to an interpolation with a non-optimal Lipschitz constant, see also Figure~\ref{fig:triangulationFailure}.
\end{remark}

\begin{figure}[t]
  \centering
  \centerline{\includegraphics[width=150mm]{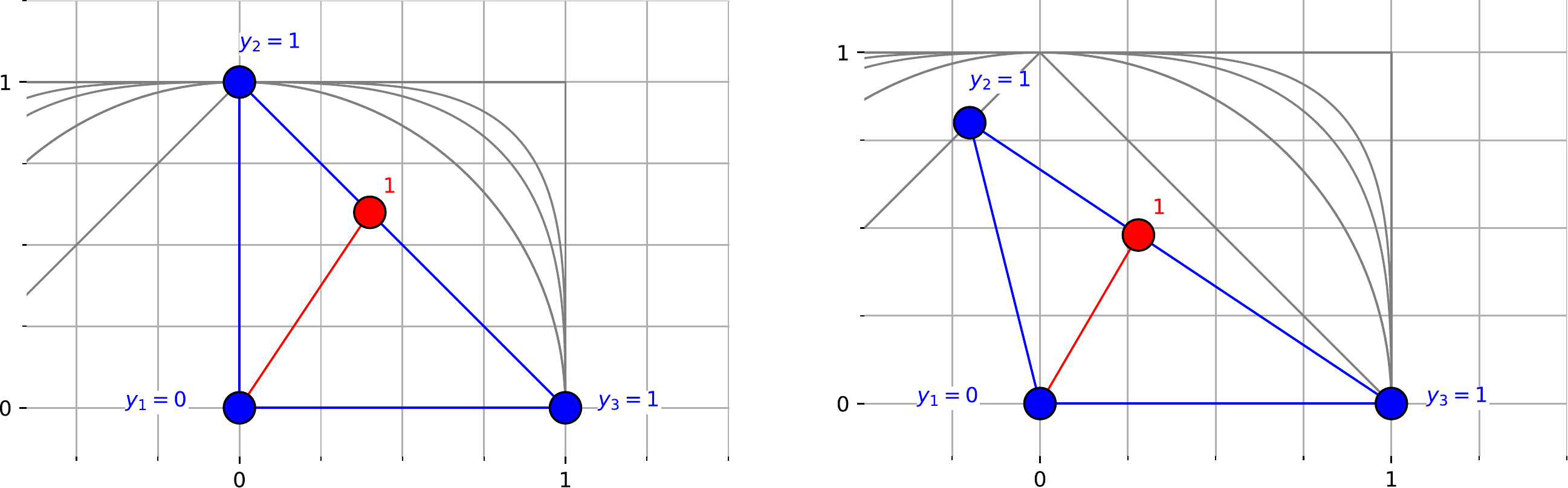}}
  \caption{Interpolating data points based on a triangulation might produce a CPWL function whose Lipschitz constant exceeds $L_{x,y}^p$. Here, the affine function going through the three blue data points $(x_k,y_k)\in (\mathbb{R}^2,\mathbb{R})$ has a $p$-norm Lipschitz constant greater than 1, where $p\in (1,+\infty]$ for the figure on the left, and $p=1$ for the one on the right. The gray lines depict the $\ell_p$ unit balls for $p\in\{1,2,3,4,+\infty\}$.}
  \label{fig:triangulationFailure}
\end{figure}
Since the maximum and minimum of finitely many affine functions can be represented by ReLU NNs, the same holds for the constructed CPWL function in Proposition~\ref{prop:GeneralApprox}.
This directly leads us to the following well-known corollary.
\begin{corollary}\label{cor:DenseLip}
Let $D \subset \R^d$ be compact and $p \in [1,\infty]$.
Then, the ReLU NNs $\Phi\colon D \to \R$ with $\Lip_p(\Phi) \leq 1$ are dense in $\Lip_{1,p}(D)$. 
\end{corollary}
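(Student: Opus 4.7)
The plan is to combine Proposition~\ref{prop:GeneralApprox} with a standard $\epsilon$-net argument. Let $f \in \Lip_{1,p}(D)$ and $\epsilon > 0$ be arbitrary. First I would exploit compactness of $D$ to pick a finite $\epsilon$-net $\{x_1,\ldots,x_N\} \subset D$, and set $y_i = f(x_i)$. Because $f$ is $1$-Lipschitz with respect to the $p$-norm, the data satisfy
\[
L_{x,y}^p = \max_{i,j}\frac{|y_i - y_j|}{\Vert x_i - x_j\Vert_p} \leq 1.
\]

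Next, Proposition~\ref{prop:GeneralApprox} yields a CPWL function $g\colon\R^d\to\R$ with $\Lip_p(g) = L_{x,y}^p \leq 1$ and $g(x_i)=y_i$ for all $i$. A uniform approximation bound then comes from a short triangle-inequality argument: for any $x \in D$, pick $x_i$ with $\Vert x - x_i\Vert_p \leq \epsilon$ and estimate
\[
|f(x) - g(x)| \leq |f(x) - f(x_i)| + |g(x_i) - g(x)| \leq 2\epsilon,
\]
using the $1$-Lipschitz property of both $f$ and $g$ together with $f(x_i) = g(x_i)$.

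It remains to represent $g$ exactly as a ReLU neural network. The construction in Proposition~\ref{prop:GeneralApprox} writes $g$ as a nested min-max of finitely many affine functions, so I would invoke the standard identities $\max(a,b) = a + \mathrm{ReLU}(b-a)$ and $\min(a,b) = -\max(-a,-b)$ to unfold $g$ into a ReLU network $\Phi$ with $\Phi = g$ pointwise and in particular $\Lip_p(\Phi) = \Lip_p(g) \leq 1$. Since $\epsilon$ was arbitrary, density follows.

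The main obstacle is really only the bookkeeping in the last step: one must verify that implementing a composition of $\min$ and $\max$ through ReLU preserves the exact Lipschitz constant of the underlying CPWL function, rather than merely bounding it through a product of layer norms. This is where it matters that we are comparing the Lipschitz constant of the \emph{function} realized by the network, not the naive layerwise estimate; the identity $\Phi = g$ settles the issue since $\Lip_p(\Phi) = \Lip_p(g)$ by construction.
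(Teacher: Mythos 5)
Your proposal is correct and follows essentially the same route the paper takes: interpolate $f$ on a finite $\epsilon$-net via Proposition~\ref{prop:GeneralApprox}, note that the resulting CPWL function is a nested min--max of affine pieces and hence exactly representable by a ReLU network, and conclude by a triangle inequality. The paper leaves this as a one-sentence remark preceding the corollary, and your write-up correctly fills in the details, including the important point that the relevant Lipschitz constant is that of the realized function $\Phi = g$ rather than a layerwise product bound.
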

Since computing the Lipschitz constant of a NN is in principal NP-hard, Corollary \ref{cor:DenseLip} has limited practical relevance.
To circumvent this issue, algorithms providing tight estimates or special architectures with simple yet sharp bounds are necessary.
In this paper, we pursue the second direction, and introduce the necessary tools for building Lipschitz-constrained architectures in the remainder of this section.
As a natural follow-up question, we then investigate the universality of these restricted architectures in Section \ref{sec:ApproxLipNet}. 

\subsection{1-Lipschitz Neural Network Architectures}\label{sec:LipArch}
A first step towards Lipschitz-constrained NNs is to constrain the weights, for which several possibilities exist.
As we are aiming for 1-Lipschitz NNs, we always choose the constraints to be one, but remark that other values are possible as well.
If we further impose that all activation functions $\sigma_{k,\alpha}$ are $1$-Lipschitz, then the resulting NN is also $1$-Lipschitz.

\paragraph{Operator norm constraints}
The $p \rightarrow q$ operator norm is given for $W \in \R^{n,m}$ and $p,q \in [1,\infty]$ by
\[\Vert W \Vert_{p,q} = \max_{x \in \R^m, \Vert x \Vert_p = 1} \Vert W x \Vert_q,\]
and $\Vert \cdot \Vert_{p} \coloneqq \Vert \cdot \Vert_{p,p}$. Note that $\Vert \cdot \Vert_1$ and $\Vert \cdot \Vert_\infty$ correspond to the maximum $\ell_1$ norm of the columns and rows of $W$, respectively.
The norm $\Vert \cdot \Vert_2$, also known as spectral norm, corresponds to the largest singular value of $W$.
To obtain a non-expansive NN of the form \eqref{eq:NN_archi} in the $p$-norm sense, the weight matrices can be constrained as
\[\Vert W_k \Vert_p \leq 1, \quad k=1,\ldots,K,\]
which we will refer to as $p$-norm-constrained weights throughout the remainder of the paper.
For matrices $W \in \R^{1,n}$ it holds $\Vert W \Vert_p = \Vert W^T \Vert_q$ with $1/p + 1/q = 1$, i.e., if we interpret them as vectors, then we have to constrain the $q$-norm instead.
In case of scalar-valued NNs, we can also constrain the weights as $\Vert W_k \Vert_q \leq 1$, $k=2,\ldots,K$, and $\Vert W_1 \Vert_{p,q} \leq 1$, since all norms are identical in $\R$.
There exist several methods for enforcing such constraints in the training stage, see \cite{GFPC18,MKKY2018,SGL2019}.

\paragraph{Orthogonality constraints}
Instead of imposing constraints on $\Vert \cdot \Vert_2$, we can also require that either $W^T W = \text{Id}$ or $W W^T = \text{Id}$, depending on the shape of $W$.
This constraint corresponds to asking that either $W$ or $W^T$ lie in the so-called Stiefel manifold.
Compared to the spectral norm constraint, this enforces all singular values of $W$ to be one.
From a computational perspective, this approach is more challenging than the previous one but helps to mitigate the problem of vanishing gradients in deep NNs.
For more details, including possible implementations, we refer to \cite{HHNPSS2019,HNS2021,huang2018orthogonal}.

\begin{remark}
Many of the implementations for the above-mentioned schemes only enforce the $p$-norm constraint or orthogonality approximately.
For theoretical guarantees, it is however necessary to ensure that the constraint is exactly satisfied. In practice, this means that sufficient numerical accuracy or additional post-processing after training might be necessary.
\end{remark}

\subsection{Special Activation Functions}
As discussed in Section \ref{sec:LipArch}, we want to use a $1$-Lipschitz activation function in \eqref{eq:NN_archi}.
Here, the quest for optimal ones in the last decade leaves us with many choices.
However, the $1$-Lipschitz constraint is the game-changer, and the relevance of each activation function must be reassessed.
In Section \ref{sec:LimitReLU}, we provide results that explain why the ReLU activation function is actually not well suited for the Lipschitz-constrained setting.
Hence, we need to resort to other activation functions that lead to increased expressivity of the resulting NN.
Note that there is a fundamental difference between component-wise and general multivariate activation functions. Finding a good trade-off in terms of representational power and computational complexity is therefore necessary.
In the following, we briefly discuss one family of activation functions for each case.
Both discussed activation functions have proven to be well-suited in the constrained setting with promising experiments.
In the remainder of the paper, we further explore their usability in the norm-constrained case and investigate the relations between the two approaches.

\paragraph{Deep spline neural networks}
A deep spline NN \cite{AU2019,BCGA2020, unserRepresenterTheoremDeep2019} uses learnable component-wise linear spline activation functions. For an illustration, see Figure~\ref{fig:linearspline}.
The rationale for this family is to generalize the popular and computationally efficient ReLU to obtain a higher expressivity, while still being able to exactly control the Lipschitz constant of each activation function.
Any such activation function is fully characterized by its linear regions and the corresponding values at the boundaries.
In the unconstrained setting, any linear spline can be implemented by means of a scalar one hidden layer ReLU NN as
\begin{equation}
    x\mapsto \sum_{m=1}^M u_{m}\mathrm{ReLU}(v_{m}x + b_{m}),
\end{equation}
where $u_{m},v_{m},b_{m} \in\mathbb{R}$ and $M\in\mathbb{N}$.
This parametrization, however, lacks expressivity under $p$-norm constraints on the weights, as it is not able to produce linear spline with second order total variation greater than 1, see Lemma~\ref{lm:netTV2} and Section~\ref{subsec:variations} for more details.
The use of deep spline NNs overcomes this limitation.
In practice, the linear spline activation functions have a fixed number of uniformly spaced breakpoints and are parametrized by the cardinal B-splines, also known has the hat basis functions \cite{BCGA2020}.
While the implementation requires additional learnable parameters---the number of breakpoints plus 2, per activation function---the evaluation complexity remains independent of the number of breakpoints. More interestingly, the Lipschitz constant of the activation function can be efficiently and precisely controlled in the learning stage \cite{bohra2021learning}.
Among weight-constrained NNs with component-wise activation functions, deep spline NNs achieve the optimal representational power.

\begin{figure}[t]
  \centering
  \centerline{\includegraphics[width=100mm]{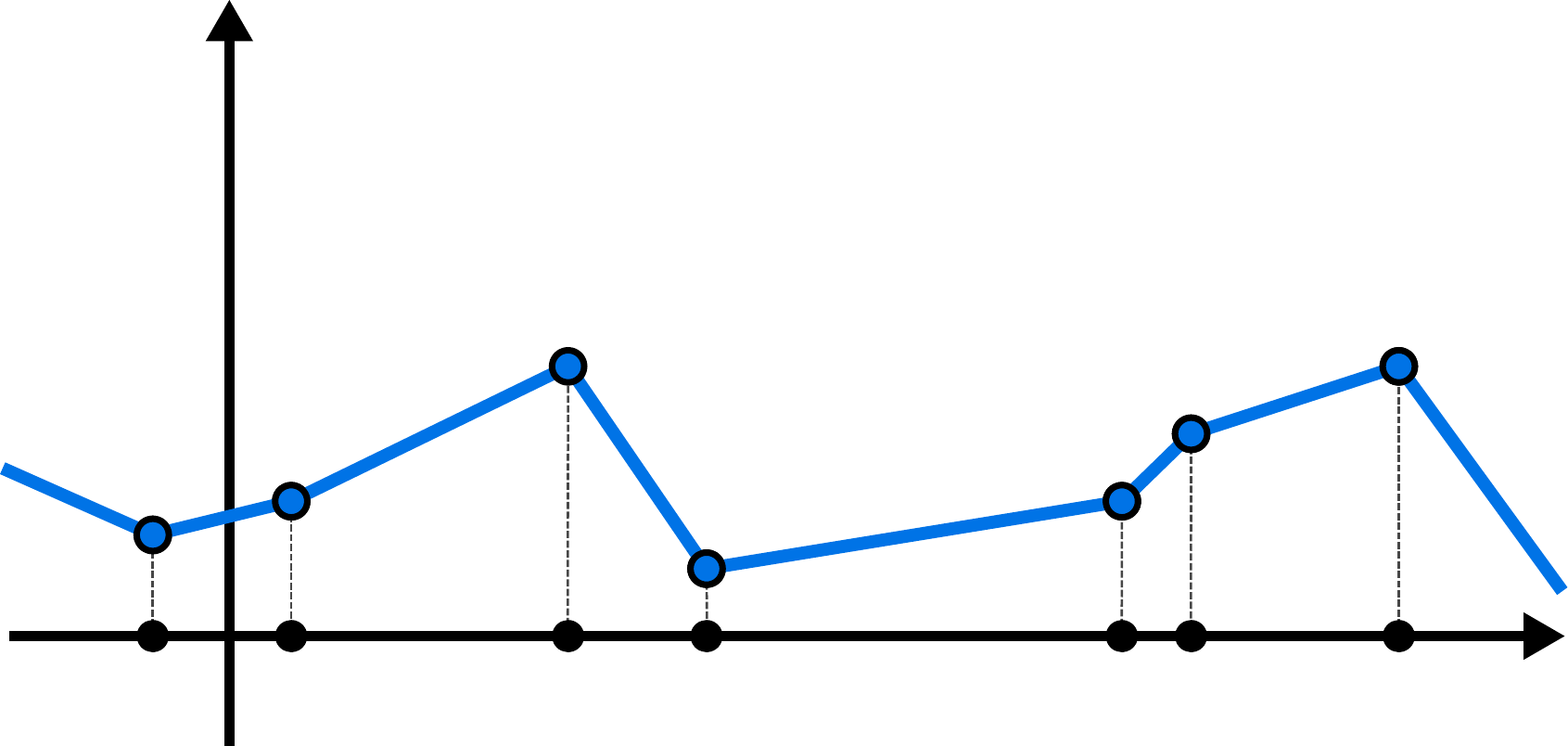}}
  \caption{A linear spline with 7 knots (also known as breakpoints) and 8 linear regions.}
  \label{fig:linearspline}
\end{figure}

\begin{lemma}
Let $(x_n,y_n)\in (\mathbb{R}^d,\mathbb{R}^p)$, $n=1,\ldots,N$, be training points and $\Phi$ a NN with $K$ layers, parameter set $u$, $p$-norm weight constraints and 1-Lipschitz activation functions.
Then, there exists a deep spline NN denoted by $\mathrm{DS}$ with the same architecture and activation functions replaced by a $1$-Lipschitz linear spline with no more than $(N-1)$ linear regions such that
\begin{equation}
    \Phi(x_n,u) = \mathrm{DS}(x_n,u) \text{ for } n=1,\ldots,N.
\end{equation}
\end{lemma}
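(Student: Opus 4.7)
The plan is a layer-by-layer replacement argument. First I would forward-propagate each of the $N$ training points through $\Phi$ and record, for every layer $k\in\{1,\dots,K-1\}$ and every neuron $j\in\{1,\dots,n_k\}$, the scalar pre-activations $z^{(k)}_{n,j} \in \R$ seen by the activation $\sigma_{k,\alpha,j}$ (i.e.\ the $j$-th component of $A_k$ applied to the output of the previous layer for input $x_n$). This gives a finite set $S_{k,j}\coloneqq\{z^{(k)}_{n,j}:n=1,\dots,N\}$ of cardinality at most $N$.

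Next I would construct the replacement spline $\tilde{\sigma}_{k,j}$ at each neuron as follows. Sort the distinct elements of $S_{k,j}$ as $t_1<\dots<t_{M}$ with $M\leq N$, and interpolate linearly the points $(t_i,\sigma_{k,\alpha,j}(t_i))$; then extend the leftmost piece to $-\infty$ and the rightmost piece to $+\infty$ by simply prolonging those two affine segments (rather than introducing new slopes). Since $\sigma_{k,\alpha,j}$ is $1$-Lipschitz, each interior slope $(\sigma(t_{i+1})-\sigma(t_i))/(t_{i+1}-t_i)$ has modulus at most $1$, so $\tilde{\sigma}_{k,j}$ is a $1$-Lipschitz linear spline. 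By construction $\tilde{\sigma}_{k,j}$ has at most $M-1\leq N-1$ distinct linear regions and agrees with $\sigma_{k,\alpha,j}$ on $S_{k,j}$.

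I would then define the deep spline network $\mathrm{DS}$ by keeping the same affine layers $A_k$ (so the weight matrices still satisfy the same $p$-norm constraint and the bias vectors are unchanged) and replacing each component-wise activation entry $\sigma_{k,\alpha,j}$ by the corresponding $\tilde{\sigma}_{k,j}$. The equality $\Phi(x_n;u)=\mathrm{DS}(x_n;u)$ for $n=1,\dots,N$ then follows by a straightforward induction on the layer index $k$: if the post-activation of layer $k-1$ at input $x_n$ coincides in $\Phi$ and $\mathrm{DS}$, the pre-activations at layer $k$ also coincide, and the chosen $\tilde{\sigma}_{k,j}$ matches $\sigma_{k,\alpha,j}$ exactly on those pre-activation values, closing the induction.

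There is no serious obstacle here; the only points that require a little care are (i) verifying that extending the endpoint segments linearly does not introduce extra regions and does not break the $1$-Lipschitz property (which is immediate since the extending slopes already have modulus at most one), and (ii) observing that the architecture---in particular the weight matrices and therefore the $p$-norm constraint---is preserved, so that $\mathrm{DS}$ is indeed a deep spline NN with the same architecture as $\Phi$.
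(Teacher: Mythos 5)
Your proposal is correct and follows essentially the same route as the paper's (much terser) proof: propagate the training points forward, note that each activation is only evaluated at $\leq N$ values, and replace it by the $1$-Lipschitz linear interpolant of those values, which has at most $N-1$ linear regions. Your write-up merely makes explicit the details the paper leaves implicit (the extension of the outermost segments and the layer-wise induction).
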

\begin{proof}
On the data points $(x_n,y_n)_{n=1}^N$, the activation functions of $\Phi$ are evaluated for at most $N$ different values.
Hence, the result directly follows by interpolating between theses values using a linear spline, which yields 1-Lipschitz linear spline activation functions.
\end{proof}
This result is somehow still unsatisfying as the number of linear regions grows with the number of training points $N$.
Later, we show that linear splines activation functions with 3 linear regions are actually sufficient, which from a theoretical point of view amounts to 6 tunable parameters per activation function.

\paragraph{Groupsort}
The sort operation takes an input vector of length $n$ and simply outputs its components sorted in ascending order.
This operation has complexity $\bigO(n \log(n))$, which is slightly worse than the linear complexity of component-wise activation functions.
The Groupsort activation function \cite{ALG19} is a generalisation of this operation:
It splits the pre-activation into groups of prescribed length and performs the sort operation within each group.
This also makes the complexity linear again.
If the group size is two, then the activation function is also known as MaxMin or norm preserving orthogonal permutation linear unit \cite{CN2016}. 
Let us remark that any arbitrary Groupsort activation function can be written as composition of MaxMin activation function, i.e., larger group sizes do not increase the theoretical expressivity.
Although not obvious at first glance, the Groupsort activation function is actually a CPWL operation.
The rationale for this activation function is to perform a nonlinear and norm preserving operation, which mitigates the issue of vanishing gradients in deep constrained architectures.
More precisely, we have that the Jacobian of the Groupsort activation function is a.e.\ given by a permutation matrix, i.e. an orthogonal matrix.
Motivated by this observation, this approach was recently generalized towards Householder activation functions $\sigma_v \colon \R^d \to \R^d$ with $v \in \R^d$, $\Vert v\Vert_2=1$, given by
\[\sigma_v(z) = \begin{cases}
 z & \mbox{ if } v^T z > 0,\\
 (\text{Id} - 2vv^T)z & \mbox{ if } v^T z  \leq 0,
\end{cases}\]
see \cite{SSF2021}.
At the hyperplane separating the two cases, i.e., $v^T z = 0$ we have $(I - 2vv^T)z = z - 2(v^T z)v = z$.
Thus, $\sigma_v$ is continuous and, moreover, the Jacobian is either I or
$(I - 2vv^T)$, which are both square orthogonal matrices.
For practical purposes, the authors recommend to also use groups of size 2.
This construction can be iterated to obtain Householder activation functions of higher order with more linear regions.

\section{Limitations of Certain Architectures}\label{sec:LimitReLU}
In this section, we provide results that explain why using more involved activation functions than the ReLU is indeed necessary for weight-constrained NNs.
\subsection{Diminishing Jacobians}
Using component-wise and monotone activation functions is known to be detrimental for the expressivity of NNs with spectral-norm-constrained weights  \cite[Thm.~1]{ALG19}.
In the following, we generalize this result to NNs with $p$-norm-constrained weights and certain CPWL activation functions along with a more precise characterization, i.e., we also cover the case where $\Vert J \Phi \Vert_p$ is not $1$ a.e.

\begin{proposition}
\label{pr:limitationReLUNet}
    Let $p\in(1,+\infty]$, $I \subset \R$ a closed interval and $\sigma \colon \R \to \R$  be a component-wise CPWL activation function with $\sigma(x) = x + b$, $b\in \R$, for $x \in I$ and $\vert \sigma^\prime(x) \vert \leq c <1$ otherwise.
    Any NN $\Phi \colon \mathbb{R}^d\rightarrow\mathbb{R}$ of the form~\eqref{eq:NN_archi} with $p$-norm-constrained weights and activation function $\sigma$ generates a CPWL function that has at most one affine piece on which $\Vert J \Phi \Vert_p = 1$.
\end{proposition}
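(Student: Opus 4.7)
My plan is to combine sub-multiplicativity of the $p$-operator norm with the rigidity of H\"older's equality case, which becomes particularly sharp because the output is scalar. On every affine piece $\Omega_m$ of the CPWL function $\Phi$, the Jacobian factorises as
\[
A^{(m)} = W_K D_{K-1}^{(m)} W_{K-1} \cdots D_1^{(m)} W_1 ,
\]
where $D_k^{(m)}$ is the diagonal matrix of slopes $\sigma'((z_k)_j)$ evaluated at the piece-constant pre-activations of layer~$k$. Since $\|W_k\|_p \leq 1$ and $|\sigma'| \leq 1$, sub-multiplicativity immediately gives $\|A^{(m)}\|_p \leq 1$, so the task is to rule out equality on more than one piece.

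Suppose $\|A^{(m)}\|_p = 1$, and fix a witness $v \in \R^d$ with $\|v\|_p = \|A^{(m)} v\|_p = 1$. Setting $y_0 = v$ and $y_k = D_k^{(m)} W_k y_{k-1}$, each of the intermediate estimates $\|y_k\|_p \leq \|y_{k-1}\|_p$ must collapse to an equality. For $p \in (1, +\infty)$ strict convexity of $t \mapsto |t|^p$ forces $\|D_k^{(m)} (W_k y_{k-1})\|_p = \|W_k y_{k-1}\|_p$ to hold exactly when $W_k y_{k-1}$ is supported on $S_k^{(m)} := \{j : (z_k)_j \in I\}$, the set of coordinates where $\sigma' = 1$. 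Hence $D_k^{(m)}$ acts as the identity on $W_k y_{k-1}$, and inductively
\[
A^{(m)} v = W_K W_{K-1} \cdots W_1 v =: A_I v .
\]
The boundary case $p = +\infty$ requires a slightly more delicate argument pinpointing a coordinate that realises the maximum along the witness trajectory inside $S_k^{(m)}$, but yields the same conclusion.

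Because $\Phi$ has scalar output, $A^{(m)}$ and $A_I$ are row vectors whose operator norm equals their dual $q$-norm with $1/p + 1/q = 1$. Both satisfy $\langle A, v \rangle = 1$ together with $\|A\|_q \leq 1$. For $q < +\infty$ (i.e.\ $p > 1$) the H\"older-equality case pins down $A$ uniquely from $v$, so $A^{(m)} = A_I$, and every affine piece on which $\|J\Phi\|_p = 1$ shares this common linear part. To finish, I will argue that two distinct pieces with linear part $A_I$ and different additive constants are incompatible with the $1$-Lipschitzness of $\Phi$ in $\|\cdot\|_p$. Letting $v^*$ denote the H\"older dual of $A_I$, the directional derivative of $\Phi$ along $v^*$ equals $+1$ on any such piece and is at most $1$ everywhere else; integrating along a straight segment joining the two candidate pieces in direction $v^*$ (or $-v^*$) and matching with the Lipschitz upper bound forces the two offsets to coincide.

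The main obstacle I expect is this final offset-matching step. Steps~1--3 identify the linear part cleanly, but deducing uniqueness of the offset requires handling the geometric configuration of the two pieces: when their projections onto $\spann(v^*)$ are disjoint, only one one-sided Lipschitz estimate is directly available, and equality of offsets must be extracted via a more careful choice of comparison line or by a continuity argument along the CPWL cells separating them. The assumption $p > 1$ is essential at the uniqueness step, since for $p = 1$ the dual norm $\|\cdot\|_\infty$ is not strictly convex, the H\"older-equality witness is no longer unique, and the reduction to a common linear part $A_I$ breaks down.
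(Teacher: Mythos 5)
Your steps 1--3 (all full-norm pieces share the linear part $W_K\cdots W_1$) are sound for $1<p<\infty$, but the argument genuinely breaks at $p=\infty$, which the proposition must cover. There, the equality $\Vert D_k^{(m)}W_ky_{k-1}\Vert_\infty=\Vert W_ky_{k-1}\Vert_\infty$ only constrains the coordinate attaining the maximum, so the damped coordinates need not vanish and $A^{(m)}v=W_K\cdots W_1v$ does not follow; moreover the H\"older-equality step with $q=1$ no longer pins down $A$ from $v$, since the extremal set is a whole face of the $\ell_1$ ball. (You flag $p=1$ as the dangerous endpoint, but that case is excluded by hypothesis; the endpoint your primal-witness method actually fails at is $p=\infty$.) The paper sidesteps this by arguing from the output side: it inducts over layers and tracks the dual $q$-norm of the accumulated row vector $v^\uT$ of the single candidate full-norm piece, and since $p>1$ gives $q<\infty$, damping any coordinate with $v_i\neq 0$ strictly decreases $\Vert\cdot\Vert_q$. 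Reorienting your equality chain to this dual side would repair steps 1--3 uniformly for all $p\in(1,\infty]$.

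The fatal gap, however, is step 4. No integration-along-$v^*$ or Lipschitz-matching argument can force the offsets of two full-slope pieces to coincide, because the desired conclusion is false for general $1$-Lipschitz CPWL functions: the soft-thresholding map $x\mapsto\sgn(x)\max(|x|-1,0)$ on $\R$ is $1$-Lipschitz and has two affine pieces with identical linear part (slope $1$) and offsets $\mp 1$, and any segment joining them passes through the flat middle piece, so the Lipschitz bound is never saturated and no contradiction arises. Indeed, the paper explicitly lists soft thresholding as one of the functions these networks \emph{cannot} represent---that exclusion is the content of the proposition, so it cannot be derived from properties of the realized function alone; it must come from the compositional structure of the network. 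The structural fact you need is that on a full-norm piece every neuron whose accumulated downstream weight is nonzero is forced into the interval $I$, where $\sigma$ is the \emph{fixed} affine map $x\mapsto x+b$; hence the entire affine function on such a piece, offset included, is the single map $A_K\circ(\Id+b)\circ\cdots\circ(\Id+b)\circ A_1$ on the relevant coordinates, and uniqueness follows. This is exactly what the paper's inductive construction of the region $\Omega_{K+1}$ delivers, and it is the piece missing from your plan.
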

\begin{proof}
    We proceed via induction over the number of layers $K$ of $\Phi$.
    For $K=1$, the mapping is affine and the statement trivially holds.
    Now, assume that the result holds for some $K>1$.
    Let 
    \[\Phi_{K+1} = \linlay_{K+1}\circ \sigma\circ \linlay_{K}\circ\cdots\circ \sigma\circ \linlay_1,\]
    which we decompose as $\Phi_{K+1} = \Phi_{K} \circ h$ with $\Phi_{K} = \linlay_{K+1}\circ \sigma\circ \linlay_K\circ\cdots\circ \sigma\circ \linlay_2$ and  $h = \sigma\circ \linlay_1$.
    The induction assumption implies that $\Vert J \Phi_K \Vert_p < 1$ on all affine regions except possibly one.
    The corresponding affine function $g_K^1\colon \mathbb{R}^{n_1}\rightarrow\mathbb{R}$ with projection region $\Omega_K \subset\mathbb{R}^{n_1}$ takes the form $x\mapsto v^T x + b$, where $v\in\mathbb{R}^{n_1}$ is such that $\|v\|_q\leq 1$, $1/p+1/q = 1$, and $b\in\mathbb{R}$.
    Now, we define the set
    \[\Omega_{K+1} = \{x\in\R^d : (A_1(x))_i \in I \text{ for any } i \text{ s.t. } v_i \neq 0\} \cap h^{-1}(\Omega_{K}).\]
    By construction, $\Phi_{K+1}$ is affine on $\Omega_{K+1}$ and coincides with $\Phi_K\circ(\linlay_1 + b)$ on this set.
    Any other affine piece of $\Phi_{K+1}$ can be written in the form of $g_K^{i}\circ h^j$, where $g_K^{i}$ and $h^j$ are affine pieces of $\Phi_K$ and $h$, respectively.
    For this composition, either of the two following holds:
    \begin{itemize}
        \item[i)] It holds $g_K^{i}\neq g_K^1$, which results in $\Vert J (g_K^{i}\circ h^j)) \Vert_p < 1$ due to $\Vert J g_K^{i} \Vert_p < 1$.
        \item[ii)] It holds $g_K^{i} = g_K^1$, which due to the definition of $\Omega_{K+1}$ implies $J h^j = \diag(d) A_1$ for some $d \in \R^{n_1}$ with entries $\vert d_i \vert \leq 1$.
        Further, there exists $i^*$ such that $v_{i^*} \neq 0 $ and $\vert d_{i^*} \vert < 1$.
        Hence, the Jacobian of the affine piece is given by $\tilde{v}^T W_1$ with $\tilde{v} = \diag(d) v$.
        Since $p\neq 1$, we get that $q<+\infty$ and $\|\tilde{v}\|_q<\|v\|_q\leq 1$.
        Consequently, $\Vert J (g_K^{i}\circ h^j)) \Vert_p = \|\tilde{v}^T W_1\|_p\leq \|\tilde{v}\|_q\|W_1\|_p<1$.
    \end{itemize}
This concludes the induction argument.
\end{proof}
For $p>1$, Proposition~\ref{pr:limitationReLUNet} implies that ReLU NNs with $p$-norm constraints on the weights cannot reproduce the absolute value and a whole family of simple functions, including the triangular hat function (also known as the B-spline of degree 1) and the soft thresholding function.
Further, this result suggests that activation functions having more than one region with maximal slope are a better choice for this approximation framework.
Recall that learnable spline activation functions are capable of having this property.

\subsection{Limited Expressivity}
\label{subsec:variations}
A meaningful metric for the expressivity of a model is its ability to produce functions with high variation.
In this section, we investigate the impact of the Lipschitz constraint on the maximal second order total variation of such a NN.
Note that we partially rely on results in \cite{HCC2018} for our proofs.
The second order total variation of a function $f\colon \mathbb{R}\rightarrow\mathbb{R}$ is defined as
${\TV{2}}(f)\coloneqq \|\mathrm{D}^2 f\|_{\mathcal{M}}$, where $\|\cdot\|_{\mathcal{M}}$ is the total variation norm related to the space of bounded Radon measures $\mathcal M$, and $\mathrm{D}$ is the distributional derivative operator.
The space of functions with bounded second order total variation is denoted by
\[
\BV{2}(\mathbb{R}) = \{f\colon \mathbb{R}\rightarrow\mathbb{R} \colon \TV{2}(f)\leq +\infty\}.
\]
For more details, we refer the reader to \cite{Bredies_2020,unserRepresenterTheoremDeep2019}.
Further, we recall that $ \TV{2}$ is a semi-norm, which in case of a CPWL function on the real line is given by the finite sum of its absolute slope changes.
Based on  Lemma~\ref{lm:netTV2} below, we infer for the $p$-norm-constrained setting that, in general, a linear spline activation function cannot be replaced with a one layer ReLU NN without losing expressivity.

\begin{lemma}
\label{lm:netTV2}
Let  $f\colon \R \to \R$ be a function parametrized by a one hidden layer NN with component-wise activation function $\sigma$ and $p$-norm-constrained weights, $p\in[1,+\infty]$.
If $\sigma \in \BV{2}(\R)$, then
\begin{equation}
    \TV{2}(f)\leq \TV{2}(\sigma).
\end{equation}
\end{lemma}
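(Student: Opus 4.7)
The plan is to exploit the explicit structure of a one-hidden-layer NN together with sublinearity of $\TV{2}$ and duality between $p$- and $q$-norms. Since the input is scalar, such an NN takes the form
\[
f(x) = \sum_{i=1}^{n_1} w_{2,i}\,\sigma(w_{1,i} x + b_{1,i}) + b_2,
\]
with $W_1 \in \R^{n_1,1}$, $W_2 \in \R^{1,n_1}$ and bias terms which drop out of $\TV{2}$.

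First, I would observe that $\TV{2}$ is a seminorm, so the triangle inequality gives
\[
\TV{2}(f) \leq \sum_{i=1}^{n_1} |w_{2,i}|\, \TV{2}\bigl(\sigma(w_{1,i}\cdot + b_{1,i})\bigr).
\]
Next, I would use that for any $a,b \in \R$ and $g \in \BV{2}(\R)$ the distributional identity $\mathrm{D}^2(g(a\cdot+b)) = a^2 (\mathrm{D}^2 g)(a\cdot+b)$ combined with a change of variables yields $\TV{2}(g(a\cdot+b)) = |a|\,\TV{2}(g)$. Applied termwise, this produces
\[
\TV{2}(f) \leq \TV{2}(\sigma) \sum_{i=1}^{n_1} |w_{1,i}|\,|w_{2,i}|.
\]

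The remaining task is to show $\sum_i |w_{1,i}||w_{2,i}| \leq 1$, which is where the $p$-norm weight constraint enters. Since $W_1$ is a column vector, its operator $p$-norm equals its vector $p$-norm $(\sum_i |w_{1,i}|^p)^{1/p}$; since $W_2$ is a row vector, its operator $p$-norm equals its vector $q$-norm $(\sum_i |w_{2,i}|^q)^{1/q}$, where $1/p + 1/q = 1$ (with the usual conventions for $p \in \{1,\infty\}$). Hölder's inequality then yields
\[
\sum_{i=1}^{n_1} |w_{1,i}|\,|w_{2,i}| \leq \|W_1\|_p \,\|W_2\|_p \leq 1,
\]
and combining with the previous bound completes the proof.

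The only subtle point is the bookkeeping on which vector norm corresponds to the operator $p$-norm of $W_1$ versus $W_2$, and checking that the boundary cases $p=1$ and $p=\infty$ still produce a dual pair compatible with Hölder — but this is exactly the observation already made in the paper after the definition of operator norms, so there is no real obstacle.
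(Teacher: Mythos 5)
Your proposal is correct and follows essentially the same route as the paper's proof: decompose $f$ as a sum, apply the seminorm property of $\TV{2}$, use the scaling identity $\TV{2}(\sigma(a\cdot+b))=|a|\,\TV{2}(\sigma)$, and conclude with H\"older's inequality against the dual pair of vector norms induced by the operator-norm constraints. The only difference is that you spell out the operator-norm-to-vector-norm bookkeeping explicitly, which the paper leaves implicit.
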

\begin{proof}
    Let $f$ be given by $x\mapsto  u^T\sigma(wx + b) = \sum_{i=1}^N u_{i}\sigma(w_{i}x + b_{i})$ with $u \coloneqq (u_1,\ldots,u_N)\in \R^N$, $w \coloneqq (w_1,\ldots,w_N)\in \R^N$ and $b \coloneqq (b_1,\ldots,b_N)\in \R^N$.
    The $p$-norm weight constraints imply that $\|w\|_p\leq 1$ and $\|u\|_q\leq 1$ with $1/p+1/q=1$.
    Since $\TV{2}$ is a semi-norm, we get
    \begin{align}
        \TV{2}(f)&\leq \sum_{i=1}^N |u_{n}|\TV{2}(\sigma(w_{i}\cdot + b_{i}))\leq \sum_{i=1}^N |u_{i}w_{i}|\TV{2}(\sigma)\leq \TV{2}(\sigma),
    \end{align}
    where the last step follows by Hölder's inequality.
\end{proof}

In principal, the composition operation already suffices to increase the second order total variation of a mapping exponentially.
For instance, the $n$ fold composition $f_n$ of $f\colon \R \to \R$ with $x \mapsto 2\vert x - 1/2\vert$ yields the sawtooth function with $2^n$ linear regions and
\begin{equation}
    \mathrm{TV}^{(2)}(f_n) = 2(2^{n}-1).
\end{equation}
This highly desirable property is, however, not achievable by ReLU NNs with $\infty$-norm-constrained weights  \cite[Thm.~1]{HCC2018}.
As the next result shows, this has a drastic impact on the approximation power of ReLU NNs.
\begin{proposition}
Let $D\subset\mathbb{R}^d$ be compact.
Then, there exists $f \in \Lip_{1,\infty}(D)$ that cannot be approximated by NNs $\Phi\colon D\rightarrow\R$ with architecture~\eqref{eq:NN_archi}, $\infty$-norm-constrained weights and ReLU activation functions.
\end{proposition}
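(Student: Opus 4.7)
The strategy combines the depth-independent bound on $\TV{2}$ for $\infty$-norm-constrained ReLU networks (from \cite{HCC2018}, consistent with Lemma~\ref{lm:netTV2} in the one-hidden-layer case) with the lower semicontinuity of $\TV{2}$ under uniform convergence. The plan is to show that any NN $\Phi\colon D \to \R$ of the form~\eqref{eq:NN_archi} with ReLU activations and $\infty$-norm-constrained weights satisfies $\TV{2}(\Phi \circ \gamma) \leq C$ along every straight line $\gamma$ in $D$, for a universal constant $C$. Once this is established, producing $f \in \Lip_{1,\infty}(D)$ with $\TV{2}(f \circ \gamma) > C$ along some line $\gamma$ immediately yields the desired counterexample.

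For the reduction to one dimension, I fix $x_0 \in D$ and a direction $v \in \R^d$ with $\|v\|_\infty = 1$, chosen so that a nontrivial segment $L = \{x_0 + tv : t \in [0,T]\}$ lies in $D$. The restriction $t \mapsto \Phi(x_0 + tv)$ is again a NN of the form~\eqref{eq:NN_archi} with ReLU activations, whose first-layer weight becomes $W_1 v \in \R^{n_1}$ with $\|W_1 v\|_\infty \leq \|W_1\|_\infty \|v\|_\infty \leq 1$, so $\infty$-norm weight constraints are preserved and the 1D $\TV{2}$ bound applies uniformly with a common constant $C$. To construct the counterexample, I take $v = e_1$ (so that $\|v\|_1 = \|v\|_\infty = 1$) and a 1-Lipschitz continuous piecewise linear sawtooth $g\colon [0,T] \to \R$ with alternating slopes $\pm 1$ on $M$ subintervals, giving $\TV{2}(g) = 2M$, which I choose larger than $C$. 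Extending by $f(x) := g(x_1)$ gives $f \in \Lip_{1,\infty}(D)$, since
\[
|f(x) - f(y)| = |g(x_1) - g(y_1)| \leq |x_1 - y_1| \leq \|x - y\|_\infty,
\]
while $f|_L = g$.

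Finally, suppose for contradiction that a sequence of ReLU NNs $\Phi_n$ with $\infty$-norm-constrained weights satisfies $\Phi_n \to f$ uniformly on $D$. Then the restrictions $\Phi_n|_L \to g$ uniformly on $[0,T]$, hence also in $L^1([0,T])$. Since $\TV{2}$ is lower semicontinuous with respect to $L^1$-convergence, it follows that
\[
\TV{2}(g) \;\leq\; \liminf_{n\to\infty} \TV{2}(\Phi_n|_L) \;\leq\; C,
\]
contradicting $\TV{2}(g) > C$. The main obstacle is the 1D $\TV{2}$ bound itself: one must verify that the statement inherited from \cite{HCC2018} (or derived by carefully iterating Lemma~\ref{lm:netTV2} through the composition of layers while exploiting that each row of the operator-$\infty$-norm-constrained weight matrices has $\ell_1$-sum at most $1$) is independent not only of the depth $K$ but also of the widths $n_1,\ldots,n_{K-1}$ of the hidden layers, so that it applies to the whole approximating sequence regardless of how its architecture varies with $n$.
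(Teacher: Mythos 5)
Your argument is correct and follows essentially the same route as the paper: the depth- and width-independent bound $\TV{2}(\Phi\circ\varphi_u)\leq 2$ from \cite[Thm.~1]{HCC2018} for $\Vert u\Vert_\infty=1$, combined with lower semicontinuity of $\TV{2}$ along a uniformly convergent sequence of restrictions to a line, and a sawtooth counterexample of large second-order total variation. The only cosmetic differences are that you invoke lower semicontinuity with respect to $L^1$ rather than uniform convergence and construct the sawtooth $f(x)=g(x_1)$ explicitly, whereas the paper obtains the existence of such an $f$ from Proposition~\ref{prop:DSVariations}; both variants are fine.
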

\begin{proof}
    By \cite[Thm.~1]{HCC2018}, we know that for any $u\in\R^d$ with $\Vert u\Vert_{\infty}=1$ and any ReLU NN $\Phi$ with $\infty$-norm weight constraints, it holds
    \[\TV{2}(\Phi\circ\varphi_u)\leq 2,\]
    where $\varphi_u\colon \R \rightarrow \R^d$ with $t \mapsto t u$.
    Let $(\Phi_n)_{n \in \N} \colon D\rightarrow \R$ be a uniformly convergent sequence of ReLU NNs with $\infty$-norm-constrained weights and limit $\Phi$.
    Then, $(\Phi_n \circ \varphi_u)_{n \in \N}$ converges uniformly to $\Phi \circ \varphi_u$ on $D$.
   Since $ \TV{2}$ is lower-semi continuous w.r.t.\ uniform convergence, see \cite[Prop.~3.14]{Bredies_2020}, we infer that
    \[\TV{2}(\Phi \circ\varphi_u)\leq 2.\]
    In other words, any $f \in \Lip_{1,\infty}(D)$  with $\TV{2} (f \circ \varphi_u) > 2$ for some $u \in \R^d$ cannot be approximated by $\infty$-norm constraint ReLU NNs.
    Note that the existence of such a function $f$ follows from Proposition~\ref{prop:DSVariations}.
\end{proof}

Unlike the ReLU activation function, linear spline activation functions can produce arbitrarily complex functions thanks to the composition operation, even in the norm-constrained setting.
\begin{proposition}
\label{prop:DSVariations}
Let $C>0$, $p\in[1,\infty]$ and $u\in\R^d$.
Then, there exists a NN $\Phi\colon \R^d\rightarrow \R$ with architecture~\eqref{eq:NN_archi}, $p$-norm-constrained weights and $1$-Lipschitz linear spline activation functions such that for $\varphi_u\colon \R \rightarrow \R^d$ with $t \mapsto t u$ it holds
\[
{\TV{2}} (\Phi\circ\varphi_u)>C.
\]
\end{proposition}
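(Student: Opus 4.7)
I would construct a shallow (two-layer) network whose single activation function already carries all the required second-order variation. The statement permits \emph{any} $1$-Lipschitz linear spline as the activation, and such splines can have arbitrarily large $\TV{2}$: a triangle wave consisting of $2N$ pieces of alternating slope $\pm 1$ (on short intervals, extended by a constant outside) is $1$-Lipschitz and satisfies $\TV{2} = 4N$. So first I would fix, for $N \in \N$ to be chosen at the end, such a spline $\sigma\colon \R \to \R$ with $\TV{2}(\sigma) \geq 4N$.

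Next, one may assume $u \neq 0$ (otherwise $\Phi \circ \varphi_u$ is constant and the claim is vacuous for $C>0$). By finite-dimensional $\ell_p$–$\ell_q$ duality, with $1/p + 1/q = 1$, pick $W_1 \in \R^{1 \times d}$ with $\|W_1\|_p = \|W_1^\top\|_q = 1$ and $W_1 u = \|u\|_p > 0$. Setting $b_1 = 0$, $W_2 = 1$, $b_2 = 0$ produces the two-layer NN
\[
\Phi(x) = \sigma(W_1 x),
\]
which has $p$-norm-constrained weights and a $1$-Lipschitz linear spline activation, so it satisfies all the requirements of the architecture~\eqref{eq:NN_archi}.

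Restricted to the line, $\Phi \circ \varphi_u(t) = \sigma(\|u\|_p\, t)$. Since the second-order total variation of a scalar CPWL function scales by $|\alpha|$ under an affine reparametrization $t \mapsto \alpha t + \beta$ (each slope is multiplied by $\alpha$ while breakpoints are merely relocated), one gets
\[
\TV{2}(\Phi \circ \varphi_u) \;=\; \|u\|_p \, \TV{2}(\sigma) \;\geq\; 4N\|u\|_p,
\]
so choosing $N$ with $4N\|u\|_p > C$ closes the argument.

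There is essentially no obstacle. The bound $\TV{2}(f) \leq \TV{2}(\sigma)$ from Lemma~\ref{lm:netTV2} looks restrictive, but $\TV{2}(\sigma)$ itself is unconstrained across the class of $1$-Lipschitz linear splines, and this extra degree of freedom — which is not available for the ReLU — is precisely what is being exploited. If one insisted on using only $3$-region activations (to match the paper's later main theorem), one would have to realize an oscillatory effective activation via depth and width, which is possible but more intricate; the statement as phrased does not require this restriction.
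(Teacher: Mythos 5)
Your proof is correct, and it takes a genuinely different route from the paper's. You concentrate all of the second-order variation in a single activation function --- a $1$-Lipschitz triangle wave with $2N$ alternating-slope pieces and $\TV{2}(\sigma)=4N$ --- and then use a two-layer network $\Phi = \sigma \circ A_1$ with a H\"older-dual first row so that $\Phi\circ\varphi_u(t)=\sigma(\|u\|_p t)$; the scaling identity $\TV{2}(\sigma(\alpha\,\cdot))=|\alpha|\,\TV{2}(\sigma)$ then finishes the argument. All steps check out (including your correct observation that $u=0$ must be excluded, which the paper's proof also implicitly assumes). The paper instead keeps every activation function trivially simple --- $\sigma_k(x)=|x|-1/2^k$, each with only two linear regions and $\TV{2}(\sigma_k)=2$ --- and generates the variation through depth: the $K$-fold composition is a sawtooth with $2^K$ regions and $\TV{2}=2(2^K-1)$. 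Your construction is shorter and is permitted by the statement as written, since the proposition places no bound on the number of linear regions per activation; but the paper's construction carries the additional message (emphasized in the sentence introducing the proposition, and consistent with Theorem~\ref{theorem:ds}) that it is the \emph{composition} operation, with activations of bounded complexity, that produces unbounded $\TV{2}$ --- exactly the mechanism that Lemma~\ref{lm:netTV2} shows is unavailable to a single weight-constrained hidden layer with a \emph{fixed} activation such as the ReLU. You flag this trade-off yourself in your closing remark, which is the right caveat: with activations restricted to $3$ linear regions, your shallow construction is no longer available and depth becomes essential.
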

\begin{proof}
    Let $\sigma_k\colon x\mapsto |x|-1/2^k$ and $F_m=\sigma_m\circ\cdots\circ \sigma_1$.
    The function $F_m$ is a sawtooth-like CPWL function with $2^m$ linear regions.
    Further, it holds for all $t\in\R$ that $\vert F'_m(t) \vert = 1$, and the sign of the slope is different for neighboring regions.
   From this, we directly infer that 
    \begin{equation}
        \mathrm{TV}_2(F_m) = 2(2^m-1).
    \end{equation}
    Now, we build a deep spline NN $\Phi_K\colon \R^d \rightarrow \R$ with $K$ hidden layers of widths $n_1,\ldots,n_K =d$ and $n_{K+1}=1$.
    The activation function used in the $k$-th hidden layer is $\sigma_k$ for the first neuron and zero otherwise, the weight matrices are chosen as the identity matrix except for the last layer, where it is chosen such that
    \[
    \Phi_K(x) = F_K(x_1).
    \]
    This construction results in
    \begin{align}
    \TV{2}(\Phi_K\circ \varphi_{e_1}) &= 2(2^K-1),
      \end{align}
     and the claim follows for $u=e_1$ by taking $K$ sufficiently large.
     The general case $u\neq e_1$ follows by using an appropriate weight matrix in the first layer.
\end{proof}

\section{Approximating 1-Lipschitz Functions}\label{sec:ApproxLipNet}
In this section, we investigate the approximation of $1$-Lipschitz functions using the NN architecture~\eqref{eq:NN_archi} together with different activation functions and weight constraints.
Compared to the setting in Section \ref{sec:Universality}, the situation is a lot more involved.
\subsection{Networks with Component-wise Activation Functions}
Here, we investigate NNs with architecture as in \eqref{eq:NN_archi}, $p$-norm-constrained weights and with $1$-Lipschitz component-wise activation functions.
As a first step towards a better understanding, we restrict our attention to functions on the real line.
In the following, we show that any activation function $\sigma \colon \R \to \R$ can be written as composition of simple linear splines. 
\begin{proposition}\label{prop:1D_Approx}
Let $g \colon \R \to \R$ be a 1-Lipschitz CPWL function.
Then, there exist $n \in \N$ and 1-Lipschitz CPWL functions $g_i \colon \R \to \R$, $i=1,\ldots,n$, with at most 3 linear regions such that $g = g_n \circ \cdots \circ g_1$.
\end{proposition}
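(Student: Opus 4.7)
The plan is to proceed by strong induction on the number $N$ of linear regions of $g$. The base case $N \leq 3$ is immediate: take $n = 1$ and $g_1 = g$. For the inductive step with $N \geq 4$, assume the statement for all 1-Lipschitz CPWL functions with fewer than $N$ regions. The idea is to exhibit a factorization $g = h \circ \tilde g$ in which $h$ is 1-Lipschitz CPWL with at most $3$ linear regions and $\tilde g$ is 1-Lipschitz CPWL with at most $N-1$ linear regions. Applying the induction hypothesis to $\tilde g$ and prepending $h$ to the resulting chain then yields the desired decomposition of $g$.

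To construct the pair $(h, \tilde g)$, let $x_1 < \cdots < x_{N-1}$ be the breakpoints of $g$ with associated slopes $s_0, s_1, \ldots, s_{N-1} \in [-1,1]$, and define $\tilde g$ by collapsing the two rightmost pieces into a single affine continuation past $x_{N-2}$,
\[
    \tilde g(x) = \begin{cases} g(x), & x \leq x_{N-2}, \\ g(x_{N-2}) + \sigma(x - x_{N-2}), & x \geq x_{N-2}, \end{cases}
\]
for a sign $\sigma \in \{-1, +1\}$ to be selected. Since $|\sigma| \leq 1$, $\tilde g$ is 1-Lipschitz and has at most $N-1$ linear regions. Then let $h\colon \R \to \R$ act as the identity on the image $\tilde g((-\infty, x_{N-2}])$ and, on the complementary half-line $\tilde g([x_{N-2}, \infty))$, be given by $h(y) = g\bigl(x_{N-2} + \sigma^{-1}(y - g(x_{N-2}))\bigr)$, so that $h$ reconstructs the last two pieces of $g$. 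A direct computation shows that on this complementary half-line, $h$ consists of two affine pieces with slopes $s_{N-2}/\sigma$ and $s_{N-1}/\sigma$, both of which lie in $[-1,1]$. Taking these together with the identity piece produces a 1-Lipschitz CPWL function $h$ with at most three linear regions, and by construction $h \circ \tilde g = g$.

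The main obstacle is that this construction requires the image $\tilde g((-\infty, x_{N-2}])$ to be disjoint from $\tilde g([x_{N-2}, \infty))$, except possibly at the common point $g(x_{N-2})$, so that $h$ is single-valued. Disjointness is automatic when $g$ is monotone on $(-\infty, x_{N-2}]$ but fails in general. To handle the non-monotone case, I would refine the choice of splitting point: pick instead a breakpoint $x^*$ at which $g$ attains a global extremum on $(-\infty, x^*]$, so that a suitable choice of sign $\sigma$ forces the two images to lie on opposite sides of $g(x^*)$. This may leave more than two linear pieces for $h$ to encode, in which case $h$ is itself a 1-Lipschitz CPWL function with fewer than $N$ linear regions to which the induction hypothesis applies. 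An alternative is to induct jointly on the number of linear regions and the number of monotonicity regions, using elementary 2-region folding functions $x \mapsto |x - a|$ to decrement the latter whenever $g$ is not monotone. In either approach, each reduction step costs only one composition factor with at most three linear regions, preserves 1-Lipschitzness, and strictly decreases the structural complexity of $g$, so the induction terminates after finitely many steps.
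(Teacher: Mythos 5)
Your core mechanism is essentially the paper's main reduction: split $g$ at a breakpoint $x^*$ that is a global extremum of $g$ restricted to a half-line, let the first factor be $g$ truncated at $x^*$ and continued affinely with slope $\pm 1$, and let the second factor be the identity on one side of $g(x^*)$ and the reconstruction of the remaining pieces on the other. You also correctly identify the crux, namely that the two images must meet only at $g(x^*)$ for the second factor to be single-valued, which forces the extremum condition. However, there is a genuine gap in the termination argument. For both factors to have strictly fewer linear regions than $g$, the splitting breakpoint $x^*=x_j$ must be \emph{interior} in the strong sense $2\le j\le N-2$: the truncated factor has at most $j+1$ regions and the reconstructing factor has $N-j+1$ regions, so splitting at $j=1$ leaves the reconstructing factor with all $N$ regions and the induction stalls. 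Such an admissible $x^*$ need not exist. Take the zigzag with breakpoints $0,1,2$ and slopes $+1,-1,+1,-1$, so $g(0)=0$, $g(1)=-1$, $g(2)=0$: the only interior breakpoint in the required sense is $x=1$, and $g(1)=-1$ is neither a max nor a min of $g$ on $(-\infty,1]$ or on $[1,\infty)$ (both restrictions attain the value $0$ and are unbounded below). Your construction therefore has no valid splitting point, and the first fix you propose does not resolve this, since the only one-sided extremum breakpoints are the outermost ones.

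The paper closes exactly this gap with two additional cases. When the two unbounded pieces of $g$ have slopes of opposite sign, it post-composes with a two-region fold $y\mapsto 2g(a^*)-y$ for $y\ge g(a^*)$, where $a^*$ is the global argmax (necessarily an outermost breakpoint), which replaces $g$ by a function with the same number of regions but matching end slopes; when the end slopes match, it post-composes with a three-region reflection of the range interval $[g(a_1),\,2g(a_1)-g(a_m)]$, which creates an interior one-sided extremum and hence lands in the first case. Each of these auxiliary factors has at most $3$ regions, so the overall composition count stays under control, and the secondary ordering (Case 3 $\to$ Case 2 $\to$ Case 1, with only Case 1 decreasing the region count) guarantees termination. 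Your sketched alternative of a joint induction on regions and monotonicity intervals using folds $x\mapsto |x-a|$ is pointing in this direction, but as stated it is not a proof: you would need to specify on which side the fold is composed, verify that a fold actually produces an admissible interior splitting point rather than merely permuting the difficulty, and check that the region count of the folded function does not grow. Until one of these routes is carried out, the argument is incomplete precisely on the class of functions (such as sawtooth-like $g$) that matter most for the expressivity results this proposition feeds into.
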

\begin{proof}
Note that we can restrict to functions $g$ with $\lim_{x \to \pm \infty}\vert \nabla g(x) \vert = 1$.
The general case can then be obtained by reparametrizing functions satisfying this condition such that they have the correct slope on the outmost linear regions.
We proceed via induction over the number of linear regions $m$ of $g$.
For $g$ with up to 3 linear regions the claim is clearly true.
Now assume that it is true for some $m \in \N$ and let $g$ be linear on the $m+1>3$ intervals $[a_i,a_{i+1}]$, $i = 0, \ldots, m$, with $a_0 = -\infty$ and $a_{m+1} = \infty$.
Now, we distinguish three cases.

\textbf{Case 1:} There exists some $a_j$, $j \in \{2,\ldots,m-1\}$, such that the function $g$ has a extremum in $a_j$ when restricted to $]-\infty, a_j]$ or $[a_j, \infty[$.
As all possible cases are similar, we only provide the construction for $a_j$ being a maximum on $]-\infty, a_j]$.
To this end, we define the function $\tilde g_1$, $\tilde g_2$ with
\begin{equation}
    \tilde g_1(x) =
    \begin{cases}
     g(x) \quad &\text{for } x \leq a_j,\\
     g(a_j) + (x-a_j) \quad &\text{for } x > a_j,
    \end{cases}
\end{equation}
and
\begin{equation}
    \tilde g_2(x) =
    \begin{cases}
     x \quad &\text{for } x \leq g(a_j),\\
     g\bigl(x + a_j-g(a_j)\bigr) \quad &\text{for } x > g(a_j),
    \end{cases}
\end{equation}
which are both 1-Lipschitz piecewise linear functions with at most $m$ linear regions satisfying $\lim_{x \to \pm \infty}\vert \nabla \tilde g_i(x) \vert = 1$.
Further, it holds $g = \tilde g_2\circ \tilde g_1$ and we can apply the induction assumption to conclude the argument.

\textbf{Case 2:} Case 1 does not apply and $\lim_{x \to \infty}\nabla g(x)/\nabla g(-x) = 1$.
In the following, we reduce this to Case 1.
We only provide the construction for $\lim_{x \to -\infty}\nabla g(x) = 1$, the other case is similar.
Here, it holds $g(a_1) \geq g(a_i) \geq g(a_m)$ for all $i=1,\ldots,m$ and we define functions $\tilde g_1$, $\tilde g_2$ with
\begin{equation}
    \tilde g_1(x) =
    \begin{cases}
     g(x) \quad &\text{for } x < a_1,\\
     2g(a_1) - g(x) \quad &\text{for } a_1 \leq x \leq a_m,\\
     g(x) + 2\bigl(g(a_1)-g(a_m)\bigr) \quad &\text{for } x > a_m,
    \end{cases}
\end{equation}
and
\begin{equation}
    \tilde g_2(x) =
    \begin{cases}
     x \quad &\text{for } x < g(a_1),\\
     2g(a_1) - x \quad &\text{for } g(a_1) \leq x \leq 2g(a_1) - g(a_m),\\
     2\bigl(g(a_m)-g(a_1)\bigr) + x \quad &\text{for } 2g(a_1) - g(a_m) < x.\\
    \end{cases}
\end{equation}
Clearly, both of the functions satisfy $\lim_{x \to \pm \infty}\vert \nabla \tilde g_i(x) \vert = 1$ and are 1-Lipschitz.
Here, the first function has $m+1$ linear regions and the second one has $3$.
Further, the first function now fits into Case 1 and it remains to show that $g = \tilde g_2 \circ \tilde g_1$.
However, this follows immediately from $g(a_1) \geq \tilde g_1(x) \geq g(a_1) - g(a_m)$ for $x \in [a_1,a_m]$.

\textbf{Case 3:} Case 1 does not apply and $\lim_{x \to \infty}\nabla g(x)/\nabla g(-x) = -1$.
In the following, we show that this case can be reduced to either Case 1 or Case 2.
We assume $\lim_{x \to -\infty}\nabla g(x) = 1$ and note that the other case is again similar.
Then, it holds $\min \{g(a_1),g(a_m)\} \geq g(a_i)$ for all $i=1,\ldots,m$ and we choose $a^* \in \argmax_{x \in \R} g(x) \in \{a_1,a_m\}$.
Next, we define functions $\tilde g_1$, $\tilde g_2$ via
\begin{equation}
    \tilde g_1(x) =
    \begin{cases}
     g(x) \quad &\text{for } x < a^*,\\
     2g(a^*) - g(x) \quad &\text{for } a^* \leq x,
    \end{cases}
\end{equation}
and
\begin{equation}
    \tilde g_2(x) =
    \begin{cases}
     x \quad &\text{for } x < g(a^*),\\
     2g(a^*) - x \quad &\text{for } g(a^*) \leq x.
    \end{cases}
\end{equation}
Note that both of the functions satisfy $\lim_{x \to \pm \infty}\vert \nabla \tilde g_i(x) \vert = 1$ and are 1-Lipschitz.
Here, the first function has $m+1$ linear regions and the second one has $2$.
Further, the first function now fits into either Case 1 or Case 2 and hence it remains to show that $g = \tilde g_2 \circ \tilde g_1$.
However, this follows immediately from the definition of $a^*$.
\end{proof}
\begin{remark}
The proof actually also shows that if $g$ satisfies $\vert \nabla g (x) \vert = 1$ a.e., then the same also holds true for the $g_i$.
Further, the result can be interpreted as approximation with a NN that has only one neuron per hidden layer.
Note that a similar approximation result for ResNets without Lipschitz constraints was given in \cite{LJ2018}.
\end{remark}
The previous result is a strong motivation for using deep spline NNs.
In particular, it implies that deep spline NNs with very simple activation functions already suffice to achieve the maximum representational power in \eqref{eq:NN_archi}.
\begin{theorem}\label{theorem:ds}
Let $D \subset \R^d$ be compact.
Then, NNs $\Psi\colon D \to \R^{n}$ with architecture~\eqref{eq:NN_archi}, $p$-norm-constrained weights, and $1$-Lipschitz spline activation functions with 3 linear regions can approximate the same functions as the corresponding NNs $\Phi\colon D \to \R^{n}$ with arbitrary $1$-Lipschitz component-wise activation functions.
\end{theorem}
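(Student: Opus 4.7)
The plan is to show the nontrivial inclusion: every NN $\Phi$ of the form \eqref{eq:NN_archi} with $p$-norm-constrained weights and arbitrary componentwise $1$-Lipschitz activations can be uniformly approximated on $D$ by a NN $\Psi$ of the same form with weights still $p$-norm-constrained but whose componentwise activations are $1$-Lipschitz splines with at most $3$ linear regions. The reverse inclusion is immediate, since a $3$-region $1$-Lipschitz spline is a particular $1$-Lipschitz componentwise activation.

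The strategy has two stages. In the first stage I replace each scalar activation $\sigma_{k,j}$ by a $1$-Lipschitz CPWL approximation $\tilde\sigma_{k,j}$. Because $D$ is compact and every layer of $\Phi$ is $1$-Lipschitz, the preimages reaching each scalar activation lie in a bounded interval $I_{k,j}\subset\R$. On such a compact interval, $1$-Lipschitz CPWL functions are dense in the $1$-Lipschitz continuous functions (e.g.\ piecewise linear interpolation on a fine uniform grid yields a $1$-Lipschitz CPWL function whose uniform error is controlled by the mesh size). Choosing the mesh fine enough and propagating the componentwise errors through the layers, which only multiply by $1$-Lipschitz maps and by $p$-norm-constrained weights, one obtains a new NN $\tilde\Phi$ with the same architecture and weights as $\Phi$ but with CPWL $1$-Lipschitz componentwise activations, satisfying $\|\Phi-\tilde\Phi\|_{L^\infty(D)}<\varepsilon$.

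In the second stage I use Proposition~\ref{prop:1D_Approx} to decompose each CPWL $1$-Lipschitz scalar activation as $\tilde\sigma_{k,j}=g_{k,j,L_{k,j}}\circ\cdots\circ g_{k,j,1}$, where every $g_{k,j,\ell}\colon\R\to\R$ is $1$-Lipschitz CPWL with at most $3$ linear regions. To align the decompositions across the $n_k$ channels, I set $L_k=\max_j L_{k,j}$ and pad with the identity (viewed as a trivial $3$-region spline) for $\ell>L_{k,j}$, then assemble the componentwise activation $\sigma^{\text{new}}_{k,\ell}(x)_j\coloneqq g_{k,j,\ell}(x_j)$. Between these new sub-layers I insert the affine map with weight $\Id_{n_k}$ and zero bias; since $\|\Id_{n_k}\|_p=1$ for every $p\in[1,\infty]$, the $p$-norm weight constraint is preserved. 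This produces a NN $\Psi$ of the form \eqref{eq:NN_archi} with $p$-norm-constrained weights and componentwise $1$-Lipschitz $3$-region spline activations that equals $\tilde\Phi$ exactly on $\R^d$, hence $\|\Phi-\Psi\|_{L^\infty(D)}<\varepsilon$.

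The main obstacle I anticipate is the bookkeeping in stage one: controlling the compact intervals $I_{k,j}$ layer-by-layer and tracking how the per-activation uniform errors compound through the affine maps and subsequent activations, so that a single choice of mesh yields the global bound $\varepsilon$. This is a routine induction on layers using the $1$-Lipschitz property of each layer and $\|W_k\|_p\le 1$, but it is the only place where any real estimate is needed; the rest of the argument is a direct application of Proposition~\ref{prop:1D_Approx} combined with the observation that inserting identity affine maps preserves both the architecture and the weight constraint.
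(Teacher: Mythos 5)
Your proof is correct and takes essentially the same approach as the paper's: both reduce the statement to Proposition~\ref{prop:1D_Approx} using compactness of the intermediate ranges and $1$-Lipschitz error propagation, the only difference being that the paper organizes the replacement as an induction on the number of layers whereas you perform the CPWL approximation of all activations globally first and then decompose. Your explicit Stage~1 (approximating an arbitrary $1$-Lipschitz activation by a $1$-Lipschitz CPWL one on the relevant compact interval before invoking the proposition, which only applies to CPWL functions) is a step the paper leaves implicit, so nothing is missing.
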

\begin{proof}
We proceed by induction over the number $K$ of layers of $\Phi$.
For $K=1$, the NN $\Phi$ produces an affine mapping and there is nothing to show.
Assume that the statement holds for $K$ layers.
Let $\Phi_{K+1} \colon \R^d \to \R^{n_{K+1}}$ be a NN of the form~\eqref{eq:NN_archi} with $p$-norm-constrained weights and $K+1$ layers.
Then $\Phi_{K+1}= A_{K+1} \circ \sigma_{\alpha_K} \circ \Phi_K$ with a $K$ layer NN $\Phi_K\colon \R^d \to \R^{n_K}$ of the same form.
By applying the induction assumption, for any $\epsilon\in{\mathbb {R} _{>0}}$ there exists a deep spline NN $\Psi_1\colon \R^d \to \R^{n_K}$ with $p$-norm-constrained weights such that $\max_{x \in D} \Vert \Phi_K(x) - \Psi_1(x) \Vert_p \leq \epsilon/2$.
Due to the finite diameter of $D$, the range of $1$-Lipschitz functions is compact. 
Hence, Proposition~\ref{prop:1D_Approx} implies that there exists a deep spline NN $\Psi_2 \colon \R^{n_k} \to \R^{n_k}$ with all affine transformations being identities such that $\max_{x \in \Phi_K(D)} \Vert \sigma_{\alpha_K}(x) - \Psi_2(x) \Vert_p \leq \epsilon/2$.
For the deep spline NN $A_{K+1} \circ \Psi_2 \circ \Psi_1$ with spectral-norm-constrained weights, we can estimate
\begin{align}
    &\max_{x \in D} \Vert \Phi(x) - A_{K+1} \circ \Psi_2 \circ \Psi_1(x) \Vert_p \leq  \max_{x \in D} \Vert  \sigma_{\alpha_K} \circ \Phi_K(x) -  \Psi_2 \circ \Psi_1(x) \Vert_p\\
    \leq& \max_{x \in D} \Vert  \sigma_{\alpha_K} \circ \Phi_K(x) -  \Psi_2 \circ \Phi_K(x) \Vert_p + \Vert  \Psi_2 \circ \Phi_K(x) -  \Psi_2\circ \Psi_1(x) \Vert_p\\
    \leq&\epsilon/2 + \max_{x \in D} \Vert \Phi_K(x) -  \Psi_1(x) \Vert_p \leq  \epsilon.
\end{align}
This concludes the proof.
\end{proof}
Theorem \ref{theorem:ds} tells us that among all NNs of the form~\eqref{eq:NN_archi} with component-wise 1-Lipschitz activation functions, splines with 3 linear regions achieve the optimal representational power. However, resolving the question if NNs with $p$-norm-constrained weights are universal approximators for $\Lip_{1,p}(D)$ appears to be very challenging and is part of ongoing research.


\subsection{Groupsort vs Linear Spline Activation Functions}
In the following, we briefly discuss how Groupsort NNs and deep spline NNs can be expressed in terms of each other.
Here, the situation differs depending on the applied weight constraint.
First, we revisit a framework specifically tailored to Groupsort NNs, where the weights in the architecture~\eqref{eq:NN_archi} satisfy $\Vert W_k \Vert_\infty \leq 1$, $k=2,\ldots,K$, and $\Vert W_1 \Vert_{p,\infty} \leq 1$.
Then, expressing an arbitrary deep spline NN using a Groupsort NN is possible due to following universality result shown in \cite[Thm.~3]{ALG19}.
\begin{proposition}\label{prop:DenseGroupInf}
Let $D \subset \R^d$ be compact and $p \in [1,\infty]$.
The NNs with architecture~\eqref{eq:NN_archi}, Groupsort activation functions with groupsize at least 2, and with weight constraints as defined above are dense in $\Lip_{1,p}(D)$.
\end{proposition}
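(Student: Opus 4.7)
The plan is to combine the explicit min-max representation established in the proof of Proposition~\ref{prop:GeneralApprox} with a tournament-style realization of $\max$ and $\min$ using $2$-Groupsort (MaxMin). Since Groupsort of any fixed group size $s \geq 2$ can realize MaxMin on two chosen coordinates while leaving the remaining coordinates untouched through appropriate affine pre- and post-processing, it suffices to treat MaxMin in detail.

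First I would reduce to the CPWL setting. Given $f \in \Lip_{1,p}(D)$ and $\varepsilon > 0$, sample $f$ on a finite subset of $D$ dense enough to pin down $f$ to within $\varepsilon$ on $D$, and invoke Proposition~\ref{prop:GeneralApprox} to obtain a CPWL function $g$ with $\Lip_p(g) \leq 1$ and $\sup_{x \in D}|f(x) - g(x)| \leq \varepsilon$. The proof of that proposition furnishes $g$ explicitly in the form $g(x) = \min_{i} \max_{j} g_{i,j}(x)$, where each affine piece $g_{i,j}(x) = \langle a_{i,j}, x\rangle + c_{i,j}$ satisfies $\|a_{i,j}\|_q \leq 1$ with $1/p + 1/q = 1$.

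Next I would build a NN realizing $g$ exactly. The first affine layer stacks the $a_{i,j}$ as its rows and the $c_{i,j}$ as its bias entries, so that $\|W_1\|_{p,\infty} \leq 1$ and its output is the vector of all values $g_{i,j}(x)$. For each fixed $i$, $\max_j g_{i,j}(x)$ is computed by a binary MaxMin tournament: each round applies MaxMin to disjoint pairs (padding the dimension to the next power of two by duplicating coordinates if needed), followed by an affine layer whose rows are canonical basis vectors selecting the top coordinate of each MaxMin block. Such selection matrices have $\ell_\infty$-operator norm equal to $1$, and MaxMin is norm-preserving in $\ell_\infty$. A second, outer MaxMin tournament, reading out the bottom coordinate of each block instead, then aggregates $\min_i$ over the intermediate maxima and outputs $g(x)$.

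The step I expect to be the main obstacle is the bookkeeping: I need every weight matrix downstream of the first layer to satisfy $\|W_k\|_\infty \leq 1$ while simultaneously carrying forward all the values still needed for the open branches of the tournament. The cleanest remedy is to overprovision the hidden widths so that each MaxMin round is immediately followed by a selection layer whose rows are single-entry $\pm 1$ vectors; this uniformly keeps $\|W_k\|_\infty = 1$. A minor secondary subtlety is checking that the argument is insensitive to whether $p = \infty$ or $p < \infty$, but this enters only through the first-layer bound $\|W_1\|_{p,\infty} \leq 1$, which is immediate from $\|a_{i,j}\|_q \leq 1$.
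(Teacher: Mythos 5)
Your proposal is correct, but note that the paper does not prove this proposition at all: it is imported verbatim from \cite[Thm.~3]{ALG19}, and your argument is essentially a reconstruction of the proof given there --- the lattice (min--max of affine pieces) representation from Proposition~\ref{prop:GeneralApprox}, with $\max$ and $\min$ realized by a MaxMin sorting tournament whose selection matrices have unit $\infty$-operator norm and whose first layer satisfies $\Vert W_1 \Vert_{p,\infty}\leq 1$ because each affine piece has gradient with $q$-norm at most $L^p_{x,y}\leq 1$. The only step worth one extra sentence of care is simulating MaxMin by a Groupsort of group size $s>2$: on a compact domain the pre-activations are bounded, so you can pad each group with sufficiently large (respectively small) constants supplied by the bias, sort, and then discard them with a selection matrix of unit norm, which is exactly the reduction the paper alludes to when it remarks that larger group sizes do not change the expressivity.
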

Although Proposition~\ref{prop:DenseGroupInf} states that density holds for all $p \in [1,\infty]$, this can be misleading as $p$ has only little to do with the involved norm constraints.
All but the first weight have to fulfill a $\infty$-norm constraint, which is rarely used in practice.
This somehow limits the practical relevance of the result.
Nevertheless, it would be interesting if a similar result also holds for deep spline NNs.
Let us remark that the proof of Proposition~\ref{prop:DenseGroupInf} relies heavily on the maximum operation and the chosen norms, which makes it difficult to generalize to other norm constraints or activation functions.

Now, we discuss the case of spectral norm constraints, which are the usual choice in practice.
For this setting, let us recall that it holds
\[\max(x_1,x_2) = \frac{x_1 + x_2 + \vert x_1 - x_2 \vert}{2}.\]
Hence, in case of spectral-constrained weights, the MaxMin activation function can be written as a deep spline NN, i.e., $\mathrm{MaxMin}(x) = W_2 \sigma_1 (W_1x)$ with
\[W_1 = W_2 = \frac{1}{\sqrt{2}}\begin{pmatrix} 1 & 1 \\ 1 &-1 \end{pmatrix} \quad \text{ and } \quad \sigma_1(x) = \begin{pmatrix} x_1 \\ \vert x_2 \vert \end{pmatrix}.\]
This can be extended to any Groupsort operation, since the MaxMin operation has the same expressivity as Groupsort under any $p$-norm constraint \cite{ALG19}.
For the reverse direction, i.e., rewriting a deep spline NN using a Groupsort NN with spectral-norm-constrained weights, we are not aware of any results.

\section{Conclusions and Open Problems}\label{sec:Conclusions}
In this paper, we have shown that NNs with linear spline activation functions with at least 3 linear regions attain the optimal approximation power for NNs with $p$-norm weight constraints and component-wise activation functions.
However, it remains an open question whether these NNs are universal approximators of $\Lip_{1,p}(D)$, $D\subset \R^d$ compact.
As this problem appears to be very challenging, our result could be a first step towards its solution.
The comparison of linear spline to non component-wise activation functions is subtle, and it so far unclear which choice leads to more expressive NNs.
For the spectral norm, we have shown that deep spline NNs are at least as expressive as Groupsort NNs, but for $\infty$-norm-constrained weights the opposite is true.
Further investigating the problem of universality under different constraints appears to be a promising research topic.
This possibly also leads to better trainable Lipschitz-constrained NN architectures.

Concerning the question of universality, we mainly focused on the approximation of scalar-valued functions $f \colon \R^d \to \R$.
This also reflects the current state of research, where most results are only formulated for scalar-valued NNs.
Extending these results to vector-valued functions appears highly non-trivial and should be addressed in future research.
Finally, we want to remark that little is known about the optimal structure for deep spline and Groupsort NNs, i.e., if it is more preferable to go deep or wide in architecture design.
\section*{Acknowledgment}
The research leading to these results was supported by the European Research Council (ERC) under European Union’s Horizon 2020 (H2020), Grant Agreement - Project No 101020573 FunLearn and by the Swiss National Science Foundation, Grant 200020 184646/1.
\bibliographystyle{abbrv}
\bibliography{references,references_additional}
\end{document}